\definecolor{navyblue}{rgb}{0.0, 0.0, 0.5}
\newtheorem{theorem}{Theorem}
\newtheorem{definition}{Definition}
\newtheorem{remark}{Remark}
\title{Domain Invariant Representation Learning with Domain Density Transformations}
\author{%
  A. Tuan Nguyen\\
  University of Oxford; VinAI Research\\
  Oxford, United Kingdom \\
  \texttt{tuan@robots.ox.ac.uk} \\
  \And
  Toan Tran \\
  VinAI Research \\
  Hanoi, Vietnam \\
  \texttt{v.toantm3@vinai.io} \\
  \AND
  Yarin Gal \\
  University of Oxford \\
  Oxford, United Kingdom \\
  \texttt{yarin@cs.ox.ac.uk} \\
  \And
  Atilim Gunes Baydin \\
  University of Oxford \\
  Oxford, United Kingdom \\
  \texttt{gunes@robots.ox.ac.uk} \\
}
\begin{document}

\maketitle

\begin{abstract}
Domain generalization refers to the problem where we aim to train a model on data from a set of source domains so that the model can generalize to unseen target domains. Naively training a model on the aggregate set of data (pooled from all source domains) has been shown to perform suboptimally, since the information learned by that model might be domain-specific and generalize imperfectly to target domains. To tackle this problem, a predominant domain generalization approach is to  learn some domain-invariant information for the prediction task, aiming at a good generalization across domains. In this paper, we propose a theoretically grounded method to learn a domain-invariant representation by enforcing the representation network to be invariant under all transformation functions among domains. We next introduce the use of generative adversarial networks to learn such domain transformations in a possible implementation of our method in practice. We demonstrate the effectiveness of our method on several widely used datasets for the domain generalization problem, on all of which we achieve competitive results with state-of-the-art models.
\end{abstract}

\section{Introduction}
\label{introduction}

Domain generalization refers to the machine learning scenario where the model is trained on multiple source domains so that it is expected to generalize well to unseen target domains. The key difference between domain generalization~\cite{khosla2012undoing,muandet2013domain,ghifary2015domain} and domain adaptation~\cite{zhao2019learning,zhang2019bridging,combes2020domain,tanwani2020domain} is that, in domain generalization, the learner does not have access to data of the target domain, making the problem much more challenging. One of the most common domain generalization approaches is to learn an invariant representation across domains, aiming at a good generalization performance on target domains. For instance, in the representation learning framework, the prediction function $y=f(x)$, where $x$ is data and $y$ is a label, is obtained as a composition $y=h \circ g (x)$ of a deep representation network $z=g(x)$, where $z$ is a learned representation of data $x$, and a smaller classifier $y=h(z)$, predicting label $y$ given representation $z$, both of which are shared across domains. With this framework, we can aim to learn an ``invariant'' representation $z$ across the source domains with the ``hope'' of a better generalization to the target domain.

Most existing ``domain-invariance''-based methods in domain generalization focus on the marginal distribution alignment~\cite{muandet2013domain,ajakan2014domain,sun2016deep,shen2018wasserstein,li2018domainb}, which are still prone to distributional shifts when the conditional data distribution is not stable. In particular, the marginal alignment refers to making the representation distribution $p(z)$ to be the same across domains. This is essential since if $p(z)$ for the target domain is different from that of source domains, the classification network $h(z)$ would face out-of-distribution data at test time. Conditional alignment refers to aligning the conditional distribution of the label given the representation  $p(y|z)$ to expect that the classification network (trained on the source domains) would give accurate predictions at test time. The formal definitions of these two types of alignment are discussed in Section~\ref{sec:approach}.

\begin{figure}[t!]
	\centering
	\includegraphics[width=0.7\linewidth]{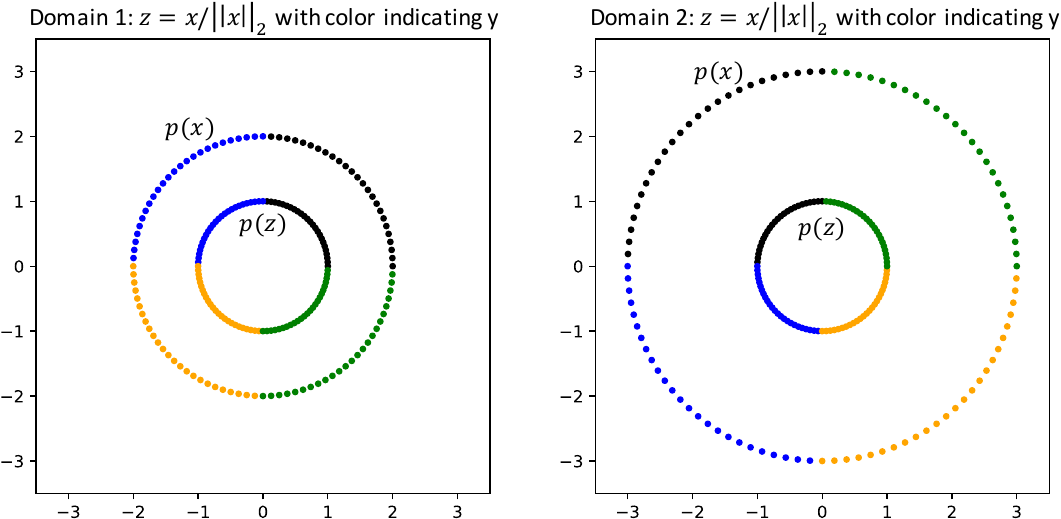}
	\caption{\small\textbf{An example of two domains}. For each domain, $x$ is uniformly distributed on the outer circle (radius 2 for domain 1 and radius 3 for domain 2), with the color indicating class label $y$. After the transformation $z=x/||x||_2$, the marginal of $z$ is aligned (uniformly distributed on the unit circle for both domains), but the conditional $p(y|z)$ is not aligned. Thus, using this representation for predicting $y$ would not generalize well across domains.}
	\label{ex}
\end{figure}

In Figure~\ref{ex} we illustrate an example where the representation $z$ satisfies the marginal alignment but not the conditional alignment. Specifically, $x$ is distributed uniformly on the circle with radius 2 (and centered at the origin) for domain 1 and distributed uniformly on the circle with radius 3 (centered at the origin) for domain 2. The representation $z$  defined by the mapping $z=g(x)=x/||x||_2$ will align the marginal distribution $p(z)$, i.e., $z$ is now distributed uniformly on the unit circle for both domains. However, the conditional distribution $p(y|z)$ is not aligned between the two domains ($y$ is represented by color), which means using this representation for classification is suboptimal, and in this extreme case would lead to 0\% accuracy in the target domain 2. This is an extreme case of misalignment but it does illustrate the importance of the conditional alignment. Therefore, we need to align both the marginal and the conditional distributions for a domain-invariant representation.

Recently, there have been several attempts \cite{li2018domain,li2018deep,zhao2020domain} to align the joint distribution of the representation and the label $p(y,z)$ in a domain generalization problem by aligning the distribution of $z$ across domains for each class,  i.e., $p(z|y)$ (given that the label distribution $p(y)$ is unchanged across domains). However, the key drawbacks of these methods are that they either do not scale well with the number of classes or have limited performance in real-world computer vision datasets (see Section~\ref{experiments}).

In this paper, we focus on learning a domain-invariant representation that aligns both the marginal and the conditional distributions in domain generalization problems. We present theoretical results regarding the necessary and sufficient conditions for the existence of a domain-invariant representation; and subsequently propose a method to learn such representations by enforcing the invariance of the representation network under domain density transformation functions. A simple intuition for our approach is that if we enforce the representation to be invariant under the transformations among the source domains, the representation will become more robust under other domain transformations.

Furthermore, we introduce an implementation of our method in practice, in which the domain transformation functions are learned through the training process of generative adversarial networks (GANs)~\cite{goodfellow2014generative,choi2018stargan}. We conduct extensive experiments on several widely used datasets and observe a significant improvement over relevant baselines. We also compare our methods against other state-of-the-art models and show that our method achieves competitive results.

Our contribution in this work is threefold:
\begin{itemize}
    \item We revisit the domain invariant representation learning problem and shed some light by providing several observations: a necessary and sufficient condition for the existence of a domain-invariant representation and a connection between domain-independent representation and a marginally-aligned representation. 
    \item We propose a theoretically grounded method for learning a domain-invariant representation based on domain density transformation functions. We also demonstrate that we can learn the domain transformation functions by GANs in order to implement our approach in practice.
    \item We empirically show the effectiveness of our method by performing experiments on widely used domain generalization datasets (e.g., Rotated MNIST, VLCS and PACS) and compare our method with relevant baselines (especially CIDG \cite{li2018domain}, CIDDG \cite{li2018deep} and DGER \cite{zhao2020domain}). 
\end{itemize}
\section{Related Work}
\paragraph{Domain generalization:} Domain generalization is an seminal task in real-world machine learning problems where the data distribution of a target domain might vary from that of the training source domains. Therefore, extensive research has been developed to handle that domain-shift problem, aiming at a better generalization performance in the unseen target domain.  A predominant approach for domain generalization is domain invariance \cite{muandet2013domain,li2018domain,li2018deep,arjovsky2019invariant,wang2020respecting,akuzawa2019adversarial,ilse2020diva,zhao2020domain,ajakan2014domain,li2018domainb,sun2016deep,shen2018wasserstein} that learns a domain-invariant representation (which we define as to align the marginal distribution of the representation or the conditional distribution of the output given the representation or both). We are particularly interested in CIDG \cite{li2018domain}, CIDDG \cite{li2018deep} and DGER \cite{zhao2020domain}, which also learn a representation that aligns the joint distribution of the representation and the label given that the class distribution is unchanged across domains. It should be noted that \citet{zhao2020domain} assume the label is distributed uniformly on all domains, while our proposed method only requires an assumption that the distribution of label is unchanged across domains (and not necessarily uniform). We also show later in our paper that the invariance of the distribution of class label across domains turns out to be the necessary and sufficient condition for the existence of a domain-invariant representation. Moreover, we provide a unified theoretical discussion about the two types of alignment, and then propose a method to learn a representation that aligns both the marginal and conditional distributions via domain density transformation functions for the domain generalization problem. Note that there exist several related works, such as~\citet{ajakan2014domain,ganin2016domain}, that use adversarial loss with a domain discriminator to align the marginal distribution of representation among domains, but they are different from our approach. In particular, our method only uses GANs or normalizing flows to learn the transformation among domains, and learn a representation that is invariant under these functions, without using an adversarial loss on the representation (which can lead to very unstable training \cite{goodfellow2016nips,kodali2017convergence}). There also exist works \cite{liu2016coupled,hoffman2018cycada,bousmalis2017unsupervised,russo2018source} in the domain adaptation literature that use generative modeling to learn a domain transformation function from source to target images, and use the transformed images to train a classifier. Our method differs from these by enforcing the representation to be invariant under the domain transformation, and we show theoretically that the representation learned that way would be domain-invariant marginally and conditionally. Meanwhile, the above works use the domain transformation to transform the images and train the classifier directly on the transformed data, and are not effective or applicable for domain generalization.

Another line of methods that received a recent surge in interest is applying the idea of meta-learning for domain generalization problems \cite{du2020learning,balaji2018metareg,li2018learning,behl2019alphamaml}. The core idea behind these works is that if we train a model that can adapt among the source domains well, it would be more likely to adapt to unseen target domains. Recently, there are approaches \cite{ding2017deep,chattopadhyay2020learning,seo2019learning} that make use of the domain specificity, together with domain invariance, for the prediction problem. The argument here is that domain invariance, while being generalized well between domains, might be insufficient for the prediction of each specific domain and thus domain specificity is necessary. We would like to emphasize that our method is not a direct competitor of meta-learning based and domain-specificity based methods. In fact, we expect that our method can be used in conjunction with these methods to get the best of both worlds for better performance.

\paragraph{Density transformation between domains:} Since our method is based on domain density transformations, we will review briefly some related works here. To transform the data density between domains, one can use several types of generative models. Two common methods are based on GANs \cite{zhu2017unpaired,choi2018stargan,choi2020stargan} and normalizing flows \cite{grover2020alignflow}. Although our method is not limited to the choice of the generative model used for learning the domain transformation functions, we opt to use GAN, specifically StarGAN \cite{choi2018stargan}, for scalability. This is just an implementation choice to demonstrate the use and effectiveness of our method in practice, and it is unrelated to our theoretical results.

\paragraph{Connection to contrastive learning:} Our method can be interpreted intuitively as a way to learn a representation network that is invariant (robust) under domain transformation functions. On the other hand, contrastive learning \cite{chen2020simple,chen2020big,misra2020self} is also a representation learning paradigm where the model learns images' similarity. In particular, contrastive learning encourages the representation of an input to be similar under different transformations (usually image augmentations). However, the transformations in contrastive learning are not learned and do not serve the purpose of making the representation robust under domain transformations. Our method first learns the transformations between domains and then uses them to learn a representation that is invariant under domain shifts.

\section{Theoretical Approach}
\label{sec:approach}
\subsection{Problem Statement}
\label{sec:problem_statement}


\begin{figure*}
  \centering
	\includegraphics[width=3.5cm]{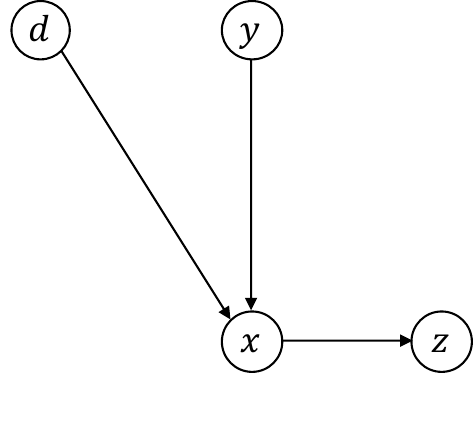}
	\vspace{-0.1in}
	\caption{\small \textbf{Graphical model}. The data distribution is $p(x,y|d)$ for each domain $d$. Our goal is to learn a representation $z$ with a mapping $p(z|x)$ from $x$ so that $z$ can be generalized across domains for the prediction task.}
	\label{graphical_model}
\end{figure*}

Let us denote the data distribution for a domain $d\in \mathcal D$ by $p(x,y|d)$, where the variable $x \in \mathcal X$ represents the data and $y \in \mathcal Y$ is its corresponding label. The graphical model for our domain generalization framework is depicted in Figure~\ref{graphical_model}, in which the joint distribution is presented as follows:
\begin{equation}
    p(d,x,y,z) = p(d)p(y)p(x|y,d)p(z|x)\;.
\end{equation}

In the domain generalization problem, since the data distribution $p(x,y|d)$ varies between domains, we expect the changes in the marginal data distribution $p(x|d)$ or the conditional data distribution $p(y|x,d)$ or both. In this paper, we assume that $p(y|d)$ is invariant across domains, i.e., the marginal distribution of the label $y$ is not dependent on the domain $d$---this assumption is shown to be the key condition for the existence of a domain-invariant representation (see Remark~\ref{theorem1}). This is practically reasonable since in many classification datasets, the class distribution can be assumed to be unchanged across domains (usually uniform distribution among the classes, e.g., balanced datasets).

Our aim is to find a domain-invariant representation $z$ represented by the mapping $p(z|x)$ that can be used for the classification of label $y$ and be generalized among domains. In practice, this mapping can be deterministic (in that case, $p(z|x)=\delta_{g_\theta(x)}(z)$ with some function $g_\theta$, where $\delta$ is the Dirac delta distribution) or probabilistic (e.g., a normal distribution with the mean and standard deviation outputted by a network parameterized by $\theta$). For all of our experiments, we use a deterministic mapping for an efficient inference at test time, while in this section, we present our theoretical results with the general case of a distribution $p(z|x)$.

In most existing domain generalization approaches, the domain-invariant representation $z$ is defined using one of the two following definitions:

\begin{definition}
\textbf{(Marginal Distribution Alignment)} The representation $z$ is said to satisfy the marginal distribution alignment condition if $p(z|d)$ is invariant w.r.t. $d$.
\end{definition}

\begin{definition}
\textbf{(Conditional Distribution Alignment)} The representation $z$ is said to satisfy the conditional distribution alignment condition if $p(y|z,d)$ is invariant w.r.t. $d$.
\end{definition}

However, when the joint data distribution varies between domains, it is crucial to align both the marginal and the conditional distribution of the representation $z$. To this end, this paper aims to learn a representation $z$ that satisfies both the marginal and conditional alignment conditions. We justify our assumption of independence between $y$ and $d$ (thus $p(y|d)=p(y)$) by the following remark, which shows that this assumption turns out to be the necessary and sufficient condition for learning a domain-invariant representation. Note that this condition is also used in several existing works~ \cite{zhao2020domain,li2018domain,li2018deep}.

\begin{remark}
\label{theorem1}
The invariance of $p(y|d)$ across domains $d$ is the necessary and sufficient condition for the existence of a domain-invariant representation (that aligns both the marginal and conditional distributions).
\end{remark}

\begin{proof}
provided in the appendix.
\end{proof}

It is also worth noting that methods which learn a domain independent representation, for example, ~\cite{ilse2020diva}, only align the marginal distribution. This comes directly from the following remark:

\begin{remark}
A representation $z$ satisfies the marginal distribution alignment condition if and only if $I(z,d)=0$, where $I(z,d)$ is the mutual information between $z$ and $d$. 
\end{remark}
\begin{proof}
provided in the appendix.
\end{proof}

The question still remains that how we can learn a non-trivial domain invariant representation that satisfies both of the distribution alignment conditions. This will be discussed in the following subsection.

\begin{figure*}
	\centering
	\includegraphics[width=0.7\linewidth]{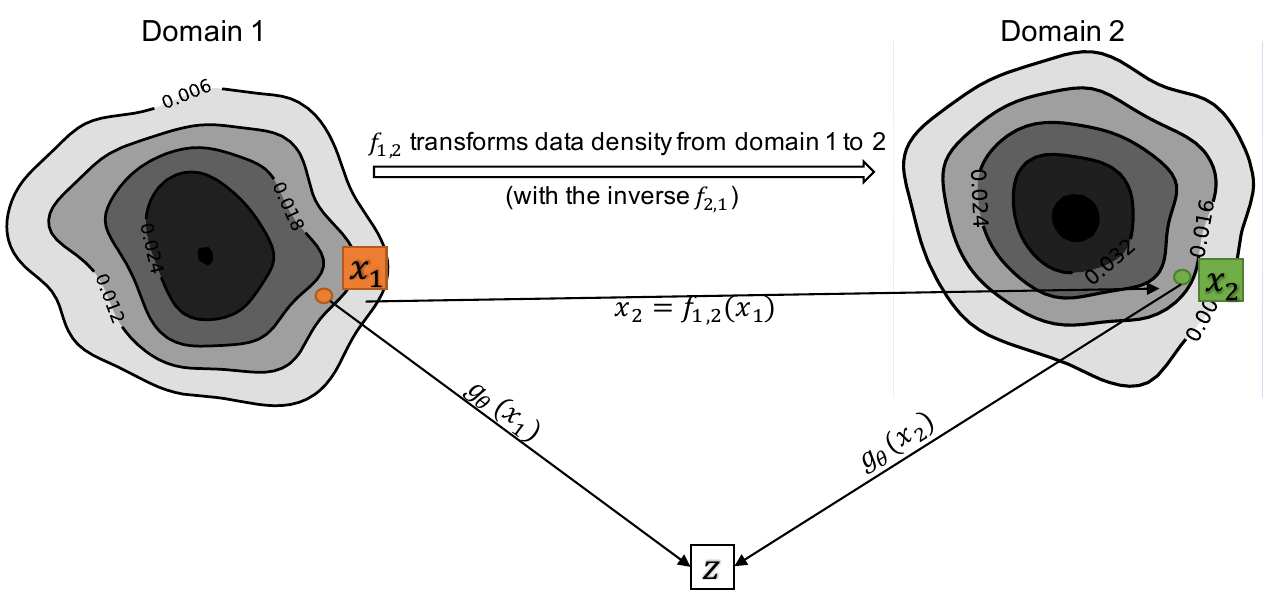}
	\caption{\small\textbf{Domain density transformation}. If we know the function $f_{1,2}$ that transforms the data density from domain 1 to domain 2, we can learn a domain invariant representation network $g_\theta(x)$ by enforcing it to be invariant under $f_{1,2}$, i.e., $g_\theta(x_1)=g_\theta(x_2)$ for any $x_2=f_{1,2}(x_1)$\;.}
	\label{concept}
\end{figure*}

\subsection{Learning a Domain-Invariant Representation with Domain Density Transformation Functions}
To present our method, we will make some assumptions about the data distribution. Specifically, for any two domains $d,d'$, we assume that there exists an invertible and differentiable function denoted by $f_{d,d'}$ that transforms the density $p(x|y,d)$ to $p(x'|y,d'), \forall y$. Let $f_{d,d'}$ be the inverse of $f_{d',d}$, i.e., $f_{d',d}:=(f_{d,d'})^{-1}$.

Due to the invertibility and differentiability of $f$'s, we can apply the change of variables theorem \cite{rudin2006real,bogachev2007measure} for the distributions above. In particular, with $x' = f_{d,d'}(x)$ (and thus $x=f_{d',d}(x')$), we have:
\begin{equation}
    p(x|y,d) = p(x'|y,d')\left|\text{det } J_{f_{d',d}}(x')\right|^{-1} \label{cv},
\end{equation}
where $J_{f_{d',d}}(x')$ is the Jacobian matrix of the function $f_{d',d}$ evaluated at $x'$. 

Multiplying both sides of Eq.~\ref{cv} with $p(y|d)=p(y|d')$, we get
\begin{equation}
    p(x,y|d) = p(x',y|d')\left|\text{det } J_{f_{d',d}}(x')\right|^{-1};
\end{equation}

and marginalizing both sides of the above equation over $y$ gives us
\begin{equation}
    p(x|d) = p(x'|d')\left|\text{det } J_{f_{d',d}}(x')\right|^{-1}.\;
    \label{cv2}
\end{equation}

By using Eq.~\ref{cv} and Eq.~\ref{cv2}, we can prove the following theorem, which offers an efficient way to learn a domain-invariant representation, given the transformation functions $f$'s between domains.

\begin{theorem}
Given an invertible and differentiable function $f_{d,d'}$ (with the inverse $f_{d',d}$) that transforms the data density from domain $d$ to $d'$ (as described above). Assuming that the representation $z$ satisfies:
\begin{equation}
    \label{z_defi}
    p(z|x)=p(z|f_{d,d'}(x)), \; \forall x,
\end{equation}
Then it aligns both the marginal and the conditional of the data distribution for domain $d$ and $d'$.
\end{theorem}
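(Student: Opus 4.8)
The plan is to verify the two alignment conditions for the pair of domains $d$ and $d'$ separately, using the change-of-variables identities Eq~\ref{cv} and Eq~\ref{cv2} that were established just before the statement, together with the invariance hypothesis Eq~\ref{z_defi} on the representation map $p(z|x)$. Throughout I will write $x' = f_{d,d'}(x)$, so that $x = f_{d',d}(x')$ and the Jacobian factor $\left|\text{det } J_{f_{d',d}}(x')\right|^{-1}$ is exactly the density ratio appearing in those identities.

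First I would handle the marginal alignment, i.e.\ that $p(z|d) = p(z|d')$. Starting from $p(z|d) = \int p(z|x)\,p(x|d)\,\text{d}x$, I substitute $x = f_{d',d}(x')$ as a change of integration variable; the volume element transforms as $\text{d}x = \left|\text{det } J_{f_{d',d}}(x')\right|\,\text{d}x'$. By Eq~\ref{cv2}, $p(x|d) = p(x'|d')\left|\text{det } J_{f_{d',d}}(x')\right|^{-1}$, and by Eq~\ref{z_defi}, $p(z|x) = p(z|f_{d,d'}(x))$. The subtle point here is bookkeeping with the inverse: $x = f_{d',d}(x')$ means $x' = f_{d,d'}(x)$ precisely when $f_{d',d} = (f_{d,d'})^{-1}$, which is the standing assumption, so $p(z|x) = p(z|x')$. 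Collecting the three pieces, the Jacobian factor and its inverse cancel, and the integral becomes $\int p(z|x')\,p(x'|d')\,\text{d}x' = p(z|d')$, as desired.

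Next, for the conditional alignment I would actually prove the stronger joint statement $p(y,z|d) = p(y,z|d')$, from which $p(y|z,d) = p(y|z,d')$ follows by Bayes' rule (dividing by the marginal alignment just proved, $p(z|d)=p(z|d')$), exactly as in part (i) of the proof of Theorem~\ref{theorem1} run in reverse. Write $p(y,z|d) = \int p(z|x)\,p(x,y|d)\,\text{d}x$, perform the same substitution $x = f_{d',d}(x')$, and use the intermediate identity $p(x,y|d) = p(x',y|d')\left|\text{det } J_{f_{d',d}}(x')\right|^{-1}$ (obtained in the excerpt by multiplying Eq~\ref{cv} through by $p(y|d)=p(y|d')$) together with $p(z|x)=p(z|x')$; again the Jacobian factors cancel and the integral collapses to $\int p(z|x')\,p(x',y|d')\,\text{d}x' = p(y,z|d')$.

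The main obstacle is not any deep difficulty but getting the change-of-variables accounting exactly right: one must be careful that the Jacobian $\left|\text{det } J_{f_{d',d}}(x')\right|$ introduced by the substitution $\text{d}x = \left|\text{det } J_{f_{d',d}}(x')\right|\text{d}x'$ is the \emph{same} factor (now in the numerator) that appears inverted in Eq~\ref{cv} and Eq~\ref{cv2}, so that the cancellation is genuine and not merely formal, and that the direction of $f$ versus $f^{-1}$ is tracked consistently when applying Eq~\ref{z_defi}. A secondary point worth a sentence is that Eq~\ref{z_defi} as written is an invariance under the single map $f_{d,d'}$; since $f_{d',d}=(f_{d,d'})^{-1}$, it automatically gives invariance under $f_{d',d}$ as well, which is what is actually used in the substitution. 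Once these are pinned down, both conditions drop out immediately and the theorem is proved.
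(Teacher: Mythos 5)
Your proof is correct and follows essentially the same route as the paper's: the change of variables $x = f_{d',d}(x')$, the cancellation of the Jacobian determinant against the density ratio from Eq~\ref{cv}/Eq~\ref{cv2}, and the invariance of $p(z|\cdot)$ from Eq~\ref{z_defi}, followed by a Bayes step. The only cosmetic difference is that for the conditional alignment you compute the joint $p(y,z|d)$ directly and then divide by $p(z|d)$, whereas the paper computes $p(z|y,d)$ via Eq~\ref{cv} and then reassembles $p(y|z,d)$ from the three invariant factors $p(y|d)$, $p(z|y,d)$, $p(z|d)$; both arguments use the standing assumption $p(y|d)=p(y|d')$, just at different points.
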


\begin{proof}
provided in the appendix.
\end{proof}

This theorem indicates that, if we can find the functions $f$ that transform the data densities among the domains, we can learn a domain-invariant representation $z$ by encouraging the representation to be invariant under all the transformations $f$. This idea is illustrated in Figure~\ref{concept}. We therefore can use the following learning objective to learn a domain-invariant representation $z=g_\theta(x)$:
\begin{equation}\label{eq:obj_func}
    \mathbb{E}_{d}\left[\mathbb{E}_{p(x,y|d)}\left[l(y,g_\theta(x))+\mathbb{E}_{d'}[dis(g_\theta(x),g_\theta(f_{d,d'}(x)))]\right]\right]
\end{equation}

Assume that we have a set of $K$ sources domain $D_s=\{d_1,d_2,...,d_K\}$, the objective function in Eq.~\ref{eq:obj_func} becomes:
\begin{align}
    \mathbb{E}_{d,d'\in D_s,p(x,y|d)}\left[l(y,g_\theta(x))+dis(g_\theta(x),g_\theta(f_{d,d'}(x)))\right],
    \label{obj}
\end{align}

where $l(y,g_\theta(x))$ is the prediction loss of a network that predicts $y$ given $z=g_\theta(x)$, and $dis$ is a distance metric to enforce the invariant condition in Eq.~\ref{z_defi}. In our implementation, we use a squared error distance, e.g., $dis(g_\theta(x),g_\theta(f_{d,d'}(x))) = ||g_\theta(x)-g_\theta(f_{d,d'}(x))||^2_2$, since it performs the best in practice. However, we also considered other distances such as constrastive distance, which we discuss in more detail in the appendix.

This theorem motivates us to learn such domain transformation functions for our domain-invariant representation learning framework. In the next section, we show how one can incorporate this idea into real-world domain generalization problems by learning the transformations with generative adversarial networks.

\section{An Practical Implementation using Generative Adversarial Networks}
In practice, we can learn the functions $f$ that transform the data distributions between domains by using several generative modeling frameworks, e.g., normalizing flows \cite{grover2020alignflow} or GANs \cite{zhu2017unpaired,choi2018stargan,choi2020stargan}. One advantage of normalizing flows is that this transformation is naturally invertible by design of the neural network. However, existing frameworks (e.g., \citet{grover2020alignflow}) require two flows to transform between each pair of domains, making it not scalable (scales linearly with the number of domains). Moreover, an initial implementation of our method using AlignFlow shows similar performance with the version using GAN. Therefore, we opt to use GANs for better scalability. In particular, we use the StarGAN ~\cite{choi2018stargan} model, which is a unified network (only requiring a single network to transform across all domains) designed for image domain transformations. It should be noted that the transformations learned by StarGAN are differentiable everywhere or almost everywhere with typical choices of the activation function (e.g., tanh or ReLU), and the cycle-consistency loss enforces each pair of transformations to approximate a pair of inverse functions.

The goal of StarGAN is to learn a unified network $G$ that transforms the data density among multiple domains. In particular, the network $G(x,d,d')$ (i.e., $G$ is conditioned on the image $x$ and the two different domains $d,d'$) transforms an image $x$ from domain $d$ to domain $d'$. Different from the original StarGAN model that only takes the image $x$ and the desired destination domain $d'$ as its input, in our implementation, we feed both the original domain $d$ and desired destination domain $d'$ together with the original image $x$ to the generator $G$.

The generator's goal is to fool a discriminator $D$ into thinking that the transformed image belongs to the destination domain $d'$. In other words, the equilibrium state of StarGAN, in which $G$ completely fools $D$, is when $G$ successfully transforms the data density of the original domain to that of the destination domain. After training, we use $G(.,d,d')$ as the function $f_{d,d'}(.)$ described in the previous section and perform the representation learning via the objective function in Eq.~\ref{obj}.


Three important loss functions of the StarGAN architecture are:
\begin{itemize}
    \item Domain classification loss $\mathcal{L}_{cls}$ that encourages the generator $G$ to generate images that closely belongs to the desired destination domain $d'$.
    \item The adversarial loss $\mathcal{L}_{adv}$ that is the classification loss of a discriminator $D$ that tries to distinguish between real images and the synthetic images generated by G. The equilibrium state of StarGAN is when $G$ completely fools $D$, which means the distribution of the generated images (via $G(x,d,d'), x\sim p(x|d)$) becomes the distribution of the real images of the destination domain $p(x'|d')$. This is our objective, i.e., to learn a function that transforms domains' densities.
    \item Reconstruction loss $\mathcal{L}_{rec}=\mathbb{E}_{x,d,d'}[||x-G(x',d',d)||_1]$ where $x'=G(x,d,d')$ to ensure that the transformations preserve the image's content. Note that this also aligns with our interest since we want $G(.,d',d)$ to be the inverse of $G(.,d,d')$, which minimizes $\mathcal{L}_{rec}$.
\end{itemize}

We can enforce the generator $G$ to transform the data distribution within the class $y$ (e.g., $p(x|y,d)$ to $p(x'|y,d')$ $\forall y$) by sampling each minibatch with data from the same class $y$, so that the discriminator will distinguish the transformed images with the real images from class $y$ and domain $d'$. However, we found that this constraint can be relaxed in practice, and the generator almost always transforms the image within the original class $y$.

As mentioned earlier, after training the StarGAN model, we can use the generator $G(.,d,d')$ as our $f_{d,d'}(.)$ function and learn a domain-invariant representation via the learning objective in Eq.~\ref{obj}. We name this implementation of our method DIRT-GAN (Domain Invariant Representation learning with domain Transformations via Generative Adversarial Networks).

\section{Experiments}
\label{experiments}
\subsection{Datasets}
To evaluate our method, we perform experiments in three datasets that are commonly used in the literature for domain generalization.

\paragraph{Rotated MNIST \cite{ghifary2015domain}:} In this dataset, 1,000 MNIST images (100 per class) \cite{lecun-mnisthandwrittendigit-2010} are chosen to form the first domain (denoted $\mathcal{M}_0$), then counter-clockwise rotations of $15^{\circ},30^{\circ},45^{\circ},60^{\circ}$ and $75^{\circ}$ are applied to create five additional domains, denoted $\mathcal{M}_{15},\mathcal{M}_{30},\mathcal{M}_{45},\mathcal{M}_{60}$ and $\mathcal{M}_{75}$. The task is classification with ten classes (digits 0 to 9).

\paragraph{VLCS \cite{ghifary2015domain}:} contains 10,729 images from four domains, each domain is a subdataset. The four datasets are VOC2007 (V), LabelMe (L), Caltech-101 (C), and SUN09 (S). The task is classification with five classes.

\paragraph{PACS \cite{Li2017dg}:} contains 9,991 images from four different domains: art painting, cartoon, photo, sketch. The task is classification with seven classes.


\subsection{Experimental Setting}

For all datasets, we perform ``leave-one-domain-out'' experiments, where we choose one domain as the target domain, train the model on all remaining domains and evaluate it on the chosen domain. Following standard practice, we use 90\% of available data as training data and 10\% as validation data, except for the Rotated MNIST experiment where we do not use a validation set and just report the performance of the last epoch.

For the \textbf{Rotated MNIST} dataset, we use a network of two 3x3 convolutional layers and a fully connected layer as the representation network $g_\theta$ to get a representation $z$ of 64 dimensions. A single linear layer is then used to map the representation $z$ to the ten output classes. This architecture is the deterministic version of the network used by \citet{ilse2020diva}. We train our network for 500 epochs with the Adam optimizer \cite{kingma2014adam}, using the learning rate 0.001 and minibatch size 64, and report performance on the test domain after the last epoch.

For the \textbf{VLCS} and \textbf{PACS} datasets, for a fair comparison against our main baselines, we use the most common choices of backbone networks for those datasets in existing works as the representation networks $g_\theta$, i.e., Alexnet \cite{krizhevsky2012imagenet} for VLCS and Resnet18 \cite{he2016deep} for PACS. We replace the last fully connected layer of the backbone with a linear layer of dimension 256 so that our representation has 256 dimensions. As with the Rotated MNIST experiment, we use a single layer to map from the representation $z$ to the output. We train the network for 100 epochs with plain stochastic gradient descent (SGD) using learning rate 0.001, momentum 0.9, minibatch size 64, and weight decay 0.001. Data augmentation is also standard practice for real-world computer vision datasets like VLCS and PACS, and during the training we augment our data as follows: crops of random size and aspect ratio, resizing to 224 × 224 pixels, random horizontal flips, random color jitter, randomly converting the image tile to grayscale with 10\% probability, and normalization using the ImageNet channel means and standard deviations.

The StarGAN \cite{choi2018stargan} model implementation is taken from the authors' original source code with no significant modifications. For each set of source domains, we train the StarGAN model for 100,000 iterations with a minibatch of 16 images per iteration. 

Our code is available at \href{https://github.com/atuannguyen/DIRT}{https://github.com/atuannguyen/DIRT}. We train our model on a NVIDIA Quadro RTX 6000.

\begin{table*}[t!]
 \footnotesize
 \centering
 \caption{\small \textbf{Rotated Mnist}. Reported numbers are mean accuracy and standard deviation among 5 runs}
  \label{mnist_exp}
  \small
  \setlength{\tabcolsep}{3pt}
 \centering
 	\begin{tabular}{cccccccc}
 		\toprule
		& \multicolumn{6}{c}{Domains} &  \\
		\cmidrule(r){2-7}
		Model  &  $\mathcal{M}_0$  & $\mathcal{M}_{15}$  & $\mathcal{M}_{30}$   & $\mathcal{M}_{45}$ & $\mathcal{M}_{60}$ & $\mathcal{M}_{75}$ & Average  \\
		\midrule
		HIR \cite{wang2020respecting} & 90.34 & 99.75 & 99.40 & 96.17 & 99.25 & 91.26 & 96.03 \\
		DIVA \cite{ilse2020diva} & 93.5 & 99.3 & 99.1 & 99.2 & 99.3 & 93.0 & 97.2 \\
		DGER \cite{zhao2020domain} & 90.09 & 99.24 & 99.27 & 99.31 & 99.45 & 90.81 & 96.36 \\
		\midrule
		DA \cite{ganin2016domain} & 86.7 & 98.0 & 97.8 & 97.4 & 96.9 & 89.1 & 94.3
		\\
		LG \cite{shankar2018generalizing} & 89.7 & 97.8 & 98.0 & 97.1 & 96.6 & 92.1 & 95.3
		\\
		HEX \cite{wang2019learning} & 90.1 & 98.9 & 98.9 & 98.8 & 98.3 & 90.0 & 95.8
		\\
		ADV \cite{wang2019learning} & 89.9 & 98.6 & 98.8 & 98.7 & 98.6 & 90.4 & 95.2 \\
		\midrule
		DIRT-GAN (ours) & 97.2($\pm$0.3) & 99.4($\pm$0.1) & 99.3($\pm$0.1) & 99.3($\pm$0.1) & 99.2($\pm$0.1) & 97.1($\pm$0.3) & \textbf{98.6}\\
 		\bottomrule
 	\end{tabular}

\end{table*}

\subsection{Results}
\begin{figure*}
	\centering
	\includegraphics[width=\linewidth]{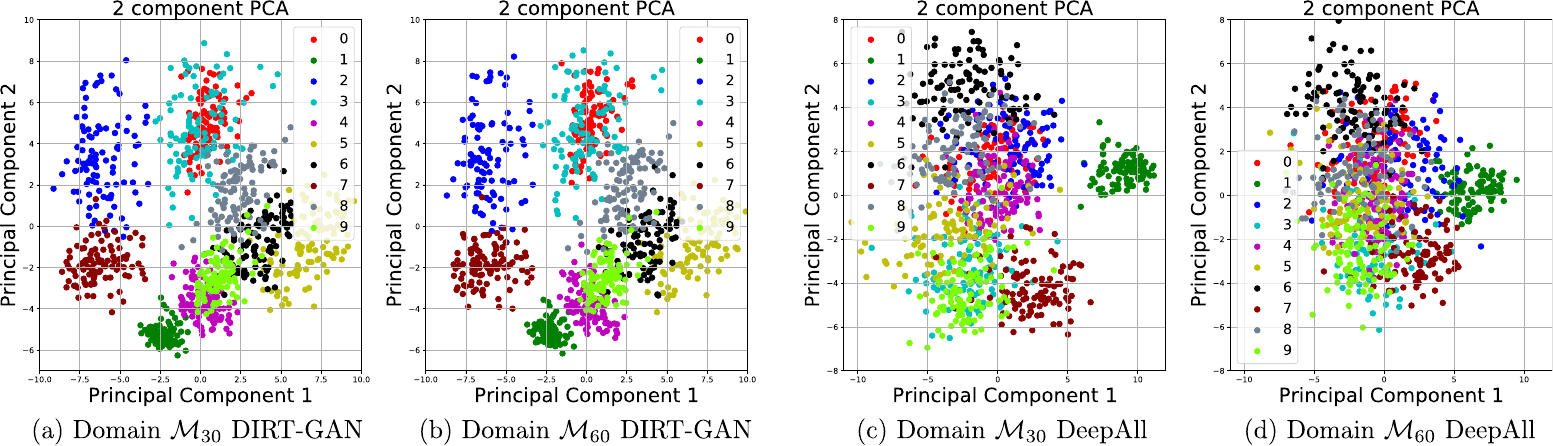}
	\caption{\small\textbf{Visualization of the representation space}. Each point indicates a representation $z$ of an image $x$ in the two dimensional space and its color indicates the label $y$. Two left figures are for our method DIRT-GAN and two right figures are for the naive model DeepAll.}
	\label{mnist_pca}
\end{figure*}

\paragraph{Rotated MNIST Experiment.}
Table~\ref{mnist_exp} shows the performance of our model on the Rotated MNIST dataset. The main baselines we consider in this experiment are HIR \cite{wang2020respecting}, DIVA \cite{ilse2020diva} and DGER \cite{zhao2020domain}, which are domain invariance based methods. Our method recognizably outperforms those, illustrating the effectiveness of our method in learning a domain-invariant representation over the existing works. We also include other best-performing models for this dataset in the second half of the table. To the best of our knowledge, we set a new state-of-the-art performance on this Rotated MNIST dataset. 

We further analyze the distribution of the representation $z$ by performing principal component analysis to reduce the dimension of $z$ from 64 to two principal components. We visualize the representation space for two domains $\mathcal{M}_{30}$ and $\mathcal{M}_{60}$, with each point indicating the representation $z$ of an image $x$ in the two-dimensional space and its color indicating the label $y$. Figures \ref{mnist_pca}a and \ref{mnist_pca}b show the representation space of our method (in domains $\mathcal{M}_{30}$ and $\mathcal{M}_{60}$ respectively). It is clear that both the marginal (judged by the general distribution of the points) and the conditional (judged by the positions of colors) are relatively aligned. Meanwhile, Figures \ref{mnist_pca}c and \ref{mnist_pca}d show the representation space with naive training (in domains $\mathcal{M}_{30}$ and $\mathcal{M}_{60}$ respectively), showing the misalignment in the marginal distribution (judged by the general distribution of the points) and the conditional distribution (for example, the distributions of blue points and green points).

\begin{table*}[t!]
 \footnotesize
 \centering
 \caption{\small \textbf{VLCS}. Reported numbers are mean accuracy and standard deviation among 5 runs}
  \label{vlcs_exp}
 \centering
 	\begin{tabular}{ccccccc}
 		\toprule
 		 &  & \multicolumn{5}{c}{VLCS} \\
 		 \cmidrule(r){3-7}
 		Model & Backbone & V & L & C & S & Average \\
 		\cmidrule(r){1-7}
 		CIDG \cite{li2018domain} & Alexnet & 65.65 & 60.43 & 91.12 & 60.85 & 69.51\\
 		CIDDG \cite{li2018deep} & Alexnet & 64.38 & 63.06 & 88.83 & 62.10 & 69.59\\
 		DGER \cite{zhao2020domain} & Alexnet & 73.24 & 58.26 & 96.92 & 69.10 & 74.38 \\
 		HIR \cite{wang2020respecting} & Alexnet & 69.10 & 62.22 & 95.39 & 65.71 & 73.10 \\
 		\cmidrule{1-7}
 		JiGen \cite{carlucci2019domain} & Alexnet & 70.62 & 60.90 & 96.93 & 64.30 & 73.19 \\
 		\cmidrule(r){1-7}
 		DIRT-GAN (ours) & Alexnet & 72.1($\pm$1.0) & 64.0($\pm$0.9) & 97.3($\pm$0.2) & 72.2($\pm$1.1) & \textbf{76.4} \\
 		\bottomrule
 	\end{tabular}

\end{table*}

\begin{table*}[t!]
 \footnotesize
 \centering
 \caption{\small \textbf{PACS}. Reported numbers are mean accuracy and standard deviation among 5 runs}
  \label{pacs_exp}
 \centering
 	\begin{tabular}{ccccccc}
 		\toprule
 		 &  & \multicolumn{5}{c}{PACS} \\
 		 \cmidrule(r){3-7}
 		Model & Backbone & Art Painting & Cartoon & Photo & Sketch & Average \\
 		\cmidrule(r){1-7}
 		DGER \cite{zhao2020domain} & Resnet18 & 80.70 & 76.40 & 96.65 & 71.77 & 81.38 \\
 		\cmidrule{1-7}
 		JiGen \cite{carlucci2019domain} & Resnet18 & 79.42 & 75.25 & 96.03 & 71.35 & 79.14 \\
 		MLDG \cite{li2018learning} & Resnet18 & 79.50 & 77.30 & 94.30 & 71.50 & 80.70 \\
 		MetaReg \cite{balaji2018metareg} & Resnet18 & 83.70 & 77.20 & 95.50 & 70.40 & 81.70 \\
 		CSD \cite{piratla2020efficient} & Resnet18 & 78.90 & 75.80 & 94.10 & 76.70 & 81.40 \\
 		DMG \cite{chattopadhyay2020learning} & Resnet18 & 76.90 & 80.38 & 93.35 & 75.21 & 81.46  \\
 		\cmidrule(r){1-7}
 		DIRT-GAN (ours) & Resnet18 & 82.56($\pm$ 0.4) & 76.37($\pm$ 0.3) & 95.65($\pm$ 0.5) & 79.89($\pm$ 0.2) & \textbf{83.62} \\
 		\bottomrule
 	\end{tabular}

\end{table*}

\paragraph{VLCS and PACS.} Tables~\ref{vlcs_exp} and \ref{pacs_exp} show the results for the VLCS and PACS datasets. In these real-world computer vision datasets, we consider HIR \cite{wang2020respecting}, CIDG \cite{li2018domain}, CIDDG \cite{li2018deep} and DGER \cite{zhao2020domain} as our main domain-invariance baselines. We also include other approaches (meta-learning based or domain-specificity based) in the second half of the tables for references. Our method significantly ourperforms other invariant-representataion baselines, namely CIDG, CIDDG and DGER, with the same backbone architectures, showing the effectiveness of our representation alignment method.


\section{Conclusion}
\label{conclusion}
To conclude, in this work we propose a theoretically grounded approach to learn a domain-invariant representation for the domain generalization problem by using domain transformation functions. We also provide insights into domain-invariant representation learning with several theoretical observations. We then introduce an implementation for our method in practice with the domain transformations learned by a StarGAN architecture and empirically show that our approach outperforms other domain-invariance based methods. Our method also achieves competitive results on several datasets when compared to other state-of-the-art models. A potential limitation of our method is that we need to train an additional network (StarGAN) to learn to transform data density among domains. However, this network is only used during training, and the required computation at test time is still the same as other models.
In the future, we plan to incorporate our method into meta-learning based and domain-specificity based approaches for improved performance. We also plan to extend the domain-invariant representation learning framework to the more challenging scenarios, for example, where domain information is not available (i.e., we have a dataset pooled from multiple source domains but do not know the domain identification of each data instance).

\bibliographystyle{abbrvnat}
\bibliography{sections/citations}

\section*{Checklist}
\begin{enumerate}

\item For all authors...
\begin{enumerate}
  \item Do the main claims made in the abstract and introduction accurately reflect the paper's contributions and scope?
    \answerYes{}
  \item Did you describe the limitations of your work?
    \answerYes{see Section~\ref{conclusion}}
  \item Did you discuss any potential negative societal impacts of your work?
    \answerNA{}
  \item Have you read the ethics review guidelines and ensured that your paper conforms to them?
    \answerYes{}
\end{enumerate}

\item If you are including theoretical results...
\begin{enumerate}
  \item Did you state the full set of assumptions of all theoretical results?
    \answerYes{}
	\item Did you include complete proofs of all theoretical results?
    \answerYes{see our Supplementary File}
\end{enumerate}

\item If you ran experiments...
\begin{enumerate}
  \item Did you include the code, data, and instructions needed to reproduce the main experimental results (either in the supplemental material or as a URL)?
    \answerYes{}
  \item Did you specify all the training details (e.g., data splits, hyperparameters, how they were chosen)?
    \answerYes{see Section~\ref{experiments}}
	\item Did you report error bars (e.g., with respect to the random seed after running experiments multiple times)?
    \answerYes{}
	\item Did you include the total amount of compute and the type of resources used (e.g., type of GPUs, internal cluster, or cloud provider)?
    \answerYes{See Section~\ref{experiments}}
\end{enumerate}

\item If you are using existing assets (e.g., code, data, models) or curating/releasing new assets...
\begin{enumerate}
  \item If your work uses existing assets, did you cite the creators?
    \answerYes{}
  \item Did you mention the license of the assets?
    \answerNA{}
  \item Did you include any new assets either in the supplemental material or as a URL?
    \answerYes{We include our source code in the Supplementary File}
  \item Did you discuss whether and how consent was obtained from people whose data you're using/curating?
    \answerNA{}
  \item Did you discuss whether the data you are using/curating contains personally identifiable information or offensive content?
    \answerNA{}
\end{enumerate}

\item If you used crowdsourcing or conducted research with human subjects...
\begin{enumerate}
  \item Did you include the full text of instructions given to participants and screenshots, if applicable?
    \answerNA{}
  \item Did you describe any potential participant risks, with links to Institutional Review Board (IRB) approvals, if applicable?
    \answerNA{}
  \item Did you include the estimated hourly wage paid to participants and the total amount spent on participant compensation?
    \answerNA{}
\end{enumerate}

\end{enumerate}

\appendix 
\section{Proofs}
For the following proofs, we treat the variables as continuous variables and always use the integral. If one or some of the variables are discrete, it is straight-forward to replace the corresponding integral(s) with summation sign(s) and the proofs still hold.
\subsection{Remark 1}
\begin{proof}
$ $\newline
\vspace{-0.2in}
\begin{itemize}
    \item[i)] If there exists a representation $z$ defined by the mapping $p(z|x)$ that aligns both the marginal and conditional distribution, then $\forall d,d',y$ we have:
    \begin{align}
        p(y,z|d) &= p(z|d)p(y|z,d) \nonumber\\
        &=p(z|d')p(y|z,d')=p(y,z|d').
        \label{eqyz}
    \end{align}
    By marginalizing both sides of Eq~\ref{eqyz} over $z$, we get $p(y|d)=p(y|d')$\,. 
    \item[ii)] If $p(y|d)$ is unchanged w.r.t. the domain $d$, then we can always find a domain invariant representation, for example, $p(z|x)=\delta_0(z)$ for the deterministic case (that maps all $x$ to 0), or $p(z|x)=\mathcal{N}(z;0,1)$ for the probabilistic case.

    These representations are trivial and not of our interest since they are uninformative of the input $x$. However, the readers can verify that they do align both the marginal and conditional distribution of data.
\end{itemize}
\end{proof}

\subsection{Remark 2}
\begin{proof} 
$ $\newline
\vspace{-0.2in}
\begin{itemize}
    \item If $I(z,d)=0$, then $p(z|d)=p(z)$, which means $p(z|d)$ is invariant w.r.t. $d$.
    \item If $p(z|d)$ is invariant w.r.t. $d$, then $\forall z,d:$
    \begin{align}
        p(z) &= \int p(z|d')p(d') \text{d}d' = \int p(z|d)p(d') \text{d}d' \nonumber \\
        & \quad(\text{since } p(z|d')=p(z|d) \forall d') \nonumber \\
        &= p(z|d)\int p(d') \text{d}d' = p(z|d) \nonumber\\
        &\implies I(z,d)=0
    \end{align}
\end{itemize}
\end{proof}

\subsection{Theorem 1}
\begin{proof}
$ $\newline
\vspace{-0.2in}
\begin{itemize}
    \item[i)] Marginal alignment: $\forall z$ we have:
    \begin{align}
        &p(z|d)=\int p(x|d)p(z|x)\text{d}x  \nonumber\\
        &=\int p(f_{d',d}(x')|d)p(z|f_{d',d}(x'))\left|\text{det } J_{f_{d',d}}(x')\right|\text{d}x'  \nonumber
        &\intertext{\quad(by applying variable substitution
        in multiple integral: $x'=f_{d,d'}(x)$)}\nonumber
        &= \int p(x'|d')\left|\text{det } J_{f_{d',d}}(x')\right|^{-1}p(z|x')\nonumber\\
        &\hspace{4.5cm}\left|\text{det } J_{f_{d',d}}(x')\right|\text{d}x' \nonumber
        \intertext{\quad(since  $p(f_{d',d}(x')|d)=p(x'|d')\left|\text{det } J_{f_{d',d}}(x')\right|^{-1}$ and $p(z|f_{d',d}(x'))=p(z|x')$)} \nonumber 
        &= \int p(x'|d')p(z|x')\text{d}x' \nonumber\\
        &= p(z|d')
    \end{align}
    \item[ii)] Conditional alignment: $\forall z,y$ we have:
    \begin{align}
        &p(z|y,d)=\int p(x|y,d)p(z|x)\text{d}x  \nonumber\\
        &=\int p(f_{d',d}(x')|y,d)p(z|f_{d',d}(x'))\left|\text{det } J_{f_{d',d}}(x')\right|\text{d}x' \nonumber
        &\intertext{\quad (by applying variable substitution 
        in multiple integral: $x'=f_{d,d'}(x)$)}\nonumber
        &= \int p(x'|y,d')\left|\text{det } J_{f_{d',d}}(x')\right|^{-1}p(z|x') \nonumber\\
        &\hspace{4.5cm}\left|\text{det } J_{f_{d',d}}(x')\right|\text{d}x' \nonumber
        \intertext{\quad(since  $p(f_{d',d}(x')|y,d)=p(x'|y,d')\left|\text{det } J_{f_{d',d}}(x')\right|^{-1}$  and $p(z|f_{d',d}(x'))=p(z|x')$)} \nonumber 
        &= \int p(x'|y,d')p(z|x')\text{d}x' \nonumber\\
        &= p(z|y,d')
    \end{align}
    Note that 
    \begin{equation}
        p(y|z,d)=\frac{p(y,z|d)}{p(z|d)}=\frac{p(y|d)p(z|y,d)}{p(z|d)}
    \end{equation}
    Since $p(y|d)=p(y)=p(y|d'), p(z|y,d)=p(z|y,d')$ and $p(z|d)=p(z|d')$, we have:
    \begin{equation}
        p(y|z,d)=\frac{p(y|d')p(z|y,d')}{p(z|d')}=p(y|z,d')
    \end{equation}
\end{itemize}
\end{proof}

\section{Discussion on the choice of the distance metric between representations}

As discussed in Section 3.2, we enforce the representation network $g_\theta$ to be invariant under the domain transformation $f_{d,d'}$ (with any two domains $d,d'$), meaning that $g_\theta(x)=g_\theta(f_{d,d'}(x))$.

In our implementation, we use the squared error distance as the distance between $g_\theta(x)$ and $g_\theta(f_{d,d'}(x))$. Admittedly, this distance tends to have the side effect of making the norm of the representation smaller. However, as visualized in Section 5.3, it does successfully align the distributions of the representation.

We also considered other distances such as contrastive distance and the cosine distance. We present below in Table~\ref{ablation} an ablation study with difference choices of the distance metrics, for the Rotated Mnist experiment with the target domain $\mathcal{M}_{75}$. Note that in this Rotated Mnist dataset, domains $\mathcal{M}_{75}$ and $\mathcal{M}_{0}$ are (equally) the hardest target domains. Therefore, we choose $\mathcal{M}_{75}$ for this ablation study to compare the performance of the variants.

\begin{table*}[ht]
 \footnotesize
 \centering
 \centering
 \caption{\small \textbf{Ablation study}: Rotated MNIST experiments with $\mathcal{M}_{75}$ as the target domain.}
 	\begin{tabular}{cc}
 		\toprule
		  Distance Metric & Accuracy  \\
		\midrule
		Squared Error Distance & 97.1±0.3 \\
		Contrastive Distance & 95.8±0.9 \\
		Cosine Distance & 90.1±0.3 \\
 		\bottomrule
 	\end{tabular}
  \label{ablation}
\end{table*}

As the Squared Error Distance gives the best performance and also is the most stable one in practice, we decide to use it for our implementation.

\end{document}